\DeclareMathOperator*{\avg}{avg}
\newcommand{\mv}{\mathbf{v}}
\newcommand{\X}{\mathcal{X}}
\newcommand{\F}{\mathbb{F}}
\DeclareMathOperator*{\argmax}{argmax}
\newtheorem{theorem}{Theorem}
\newtheorem{lemma}{Lemma}
\newtheorem{proposition}[theorem]{Proposition}
\newtheorem{definition}{Definition}
\newtheorem{example}{Example}
\newcommand{\continuation}{??}
\newenvironment{continueexample}[1]
 {\renewcommand{\continuation}{\ref{#1}}\excont[Cont]}
 {\endexcont}
\newcommand{\BibTeX}{B\kern-.05em{\sc i\kern-.025em b}\kern-.08em\TeX}
\begin{document}


\begin{frontmatter}


\paperid{248} 


\title{Backward explanations via redefinition of predicates}


\author[A]{\fnms{Léo}~\snm{Saulières}\orcidlink{0000-0002-4800-9181}\thanks{Corresponding author email: leo.saulieres@irit.fr.}}
\author[A]{\fnms{Martin C.}~\snm{Cooper}\orcidlink{0000-0003-4853-053X}}
\author[B]{\fnms{Florence}~\snm{Dupin de Saint-Cyr}\orcidlink{0000-0001-7891-9920}} 

\address[A]{IRIT, University of Toulouse III, CNRS, Toulouse, France}
\address[B]{Lab-STICC\_COMMEDIA, CNRS, Brest, France.}


\begin{abstract}
History eXplanation based on Predicates (HXP), studies the behavior of a Reinforcement Learning (RL) agent in a sequence of agent's interactions with the environment (a history), through the prism of an arbitrary predicate~\cite{saulieres:hal-04170188}. To this end, an action importance score is computed for each action in the history. The explanation consists in displaying the most important actions to the user.
As the calculation of an action's importance is \#W[1]-hard, it is necessary for long histories to approximate the scores, at the expense of their quality. We therefore propose a new HXP method, called Backward-HXP, to provide explanations for these histories without having to approximate scores. 
Experiments show the ability of B-HXP to summarise long histories. 
\end{abstract}

\end{frontmatter}

\section{Introduction}

Nowadays, Artificial Intelligence (AI) models are used in a wide range of tasks in different fields, such as medicine, agriculture and education~\cite{hamet2017artificial, eli2019applications, DBLP:journals/access/ChenCL20a}. 
Most of these models cannot be explained or interpreted without specific tools, mainly due to the use of neural networks which are effectively black-box functions. Numerous institutions~\cite{noauthor_blueprint_2022, EU_2021} and researchers~\cite{DBLP:journals/cacm/Darwiche18, DBLP:journals/cacm/Lipton18} 
have emphasized the importance of providing comprehensible models to end users. 
This is why the eXplainable AI (XAI) research field, which consists in providing methods to explain AI behavior, is flourishing. In this context, we propose a method for explaining AI models that have learned using Reinforcement Learning (RL).

In RL, the agent learns by trial and error to perform a task in an environment.
At each time step, the agent chooses an action from a state, arrives in a new state and receives a reward. The dynamics of the environment are defined by the non-deterministic transition function and the reward function. The agent learns a policy $\pi$ to maximize its reward; this policy assigns an action to each state (defining a deterministic policy). Our eXplainable Reinforcement Learning (XRL) method is restricted to the explanation of deterministic policies.

Various works focus on explaining RL agents using a notion of importance. 
To provide a visual summary of the agent's policy, Amir and Amir~\cite{DBLP:conf/atal/AmirA18} select a set of interactions of the agent with the environment (sequences)
using ``state importance''~\cite{ClouseStateImp}.
From a set of sequences, Sequeira et Gervasio propose to learn a set of information, to deduce interesting elements to show the user in the form of a visual summary~\cite{DBLP:journals/ai/SequeiraG20}. 
Using a self-explainable model, Guo et al. determine the critical time-steps of a sequence for obtaining the agent's final reward~\cite{DBLP:conf/nips/GuoWKX21}.

To explain an RL agent, explanation must capture concepts of RL~\cite{DBLP:journals/corr/abs-2202-08434}. To this end, the HXP method~\cite{saulieres:hal-04170188}, consists of studying a history of agent interactions with the environment through the prism of a certain predicate, a history being a sequence of pairs (state, action). A predicate $d$ 
is any boolean function of states which is true in the final state of the history.
This XRL method answers the question: \emph{``Which actions were important to ensure that $d$ was achieved, given the agent’s policy $\pi$?"}. This paper follows this paradigm of explanation of a history with respect to a predicate, by proposing a new way of defining the important actions of a history, called Backward-HXP (B-HXP). 
In particular, B-HXP was investigated because of the limits of (forward) HXP in explaining long histories due to the $\#$W[1]-hardness of HXP~\cite{saulieres:hal-04170188}.

The paper is structured as follows. The theoretical principle of HXP is outlined in Section 2, before defining B-HXP in Section 3. Section 4 presents the experimental results carried out on 3 problems. Section 5 presents related works and Section 6 concludes.

    \begin{figure*}
        \centering
        \includegraphics[scale=0.033]{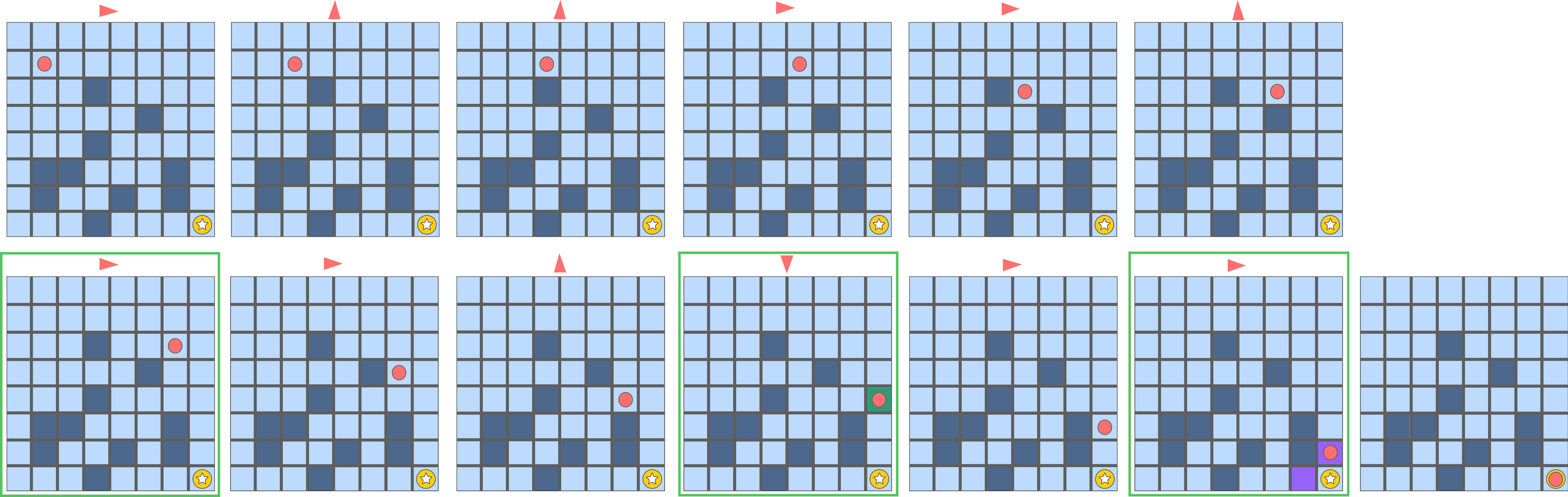}
        \caption{An example of history with 13 states $s_0,\ldots, s_{12}$ for the FL problem on which a B-HXP is computed for the \emph{win} predicate. The agent is symbolized by a red dot, the dark blue cells are holes and the destination cell is marked by a star. 
        Actions identified as important are highlighted by a green frame. \\ }
        \label{fig:FL_example}
    \end{figure*}

\section{History eXplanation via Predicates (HXP)}

An RL problem is modeled using a Markov Decision Process~\cite{sutton2018reinforcement}, which is a tuple $\langle \mathcal{S}, \mathcal{A}, R,p\rangle$. 
$\mathcal{S}$ represents the state space and $\mathcal{A}$ the action space. $A(s)$ denotes the set of available actions that can be performed from $s$. $R: \mathcal{S} \times \mathcal{A} \rightarrow \mathbb{R}$ and $p : \mathcal{S} \times \mathcal{A} \rightarrow Pr(\mathcal{S})$ are respectively the reward function and the transition function of the environment. 
$p(s'|s,a)$ represents the probability of reaching state $s'$, having performed action $a$ from state $s$. $\pi: \mathcal{S} \rightarrow \mathcal{A}$ denotes a deterministic policy that maps an action $a$ to each state $s$; thus, $\pi(s)$ is the action performed by the agent in state $s$. 
Due to the Markovian nature of the process, transitions at different instants are independent and hence the probabilities given by the transition function $p$ can be multiplied when calculating the probability of a scenario
(sequence of states).
In the following, we use the function $next$, based on the policy $\pi$ and the transition function described by $p$, to compute the next possible states (associated with their probabilities) given a set $S$ of (state, probability) pairs: $next_{\pi,p}(S) = \{(s', pr \times p(s'|s, a))$ : $(s, pr) \in S$, $a = \pi(s)$ and $p(s'|s, a) \neq 0\}$.
In order to compute the set of final states reachable at horizon $k$ from a set of states $S$, using the agent's policy $\pi$ and the transition function $p$, the function \emph{succ$^k_{\pi,p}$} is defined recursively by $succ_{\pi,p}^0(S) = S$ and $succ_{\pi,p}^{n+1}(S) = next_{\pi,p}(succ_{\pi,p}^n(S))$.

HXPs \cite{saulieres:hal-04170188} provide to the user important actions for the respect of a predicate $d$, given an agent's policy $\pi$, by computing an importance score for each action in the history. The language used for the predicate is based on the features that characterize a state. Each feature $f_i$ has a range of values defined by a domain $D_i$, the set of all features is denoted $\mathcal{F}=\{f_1, ..., f_n\}$. The feature space is therefore $\mathbb{F} = D_1 \times ... \times D_n$. The state space $\mathcal{S}$ is a subset of $\mathbb{F}$. A predicate is given by a propositional formula with literals of the form $l_{i,j}$ where $l_{i,j}$ means that the feature $f_i$ takes the value $j$ in domain $D_i$.
A state $s\in S$ is an interpretation in the language based on the vocabulary $(l_{i,j})_{i\in[1,n], j\in D_i}$ such that $s(l_{i,j})=True$ if and only if the feature $i$ as the value $j$ in the state $s$. 
It follows that the predicate $d$ can be evaluated in linear time for a given state.
The importance score represents the benefit of performing an action $a$ from $s$ rather than another action $a' \in A(s)\backslash\{a\}$, where this benefit is the probability of reaching a state at a horizon of $k$ that satisfies $d$. To evaluate an action we first require the notion of utility of a set of (state, probability) pairs.

\begin{definition}[utility]
Given a predicate $d$, \emph{the utility $u_d$ of a set of (state, probability) pairs} $S$ is: $$u_d(S) =    \sum_{(s,pr)\in S, s\models d} pr$$ 
\end{definition}

Finally, the importance of an action $a$ from a state $s$ is defined by:

\begin{definition}[importance]
Given a predicate $d$, a policy $\pi$, a transition function $p$,  
the \emph{importance score of $a$} from $s$ at horizon $k$ is:
\footnotesize $$imp_{d,\pi,p}^{k}(s, a) = u_d(succ_{\pi,p}^{k}(S_{(s,a)}))\,\, -\!\!\!\!\!\!\underset{a' \in A(s)\backslash\{a\}}{\avg} u_d(succ_{\pi,p}^{k}(S_{(s,a')}))$$\normalsize
where $\avg$ is the average and $S_{(s, a)}$ is the support of $p(.|s, a)$.\footnote{i.e. $S_{(s, a)} = \left\{ (s', p(s'|s, a)) \;\middle\vert\;
    p(s'|s, a) \neq 0 \right\}$}
\end{definition}
The importance score lies in the range $[-1, 1]$, where a positive (negative) score denotes an important (resp. not important) action in comparison with other possible actions. Its computation is $\#$W[1]-hard~\cite{saulieres:hal-04170188}, so it is necessary to approximate it, in particular by generating only part of the length-$k$ scenarios with the \emph{succ} function.


To handle long histories on problems where the number of possible transitions is large, i.e. large horizon $k$ and large branching factor (denoted $b$ hereafter), it is necessary to use approximate methods to provide explanations in reasonable time, at the expense of only approximating the importance scores.  
In the next section, we propose a new way of explaining histories 
in a step-by-step backward approach, which allows us to provide explanations in reasonable time for long histories, without having to approximate the calculation of scores. 
As we will see, this leads to other computational difficulties. The result is thus a novel method for the explanation of histories with different pros and cons compared to forward-based history explanation.

\section{Backward HXP (B-HXP)}

The idea of B-HXP is to iteratively look for the most important action in the near past of the state that respects the predicate under study. When an important action is found, we look at its associated state $s$ to define the new predicate to be studied. The process then iterates treating $s$ as the final state with this new predicate. Indeed, by observing only a subset of the actions in the history (near past), the horizon for calculating importance scores is relatively small. In this sense, importance scores can be calculated exhaustively. 
The predicate is then modified so that actions can be evaluated with respect to a predicate that they can achieve within a shorter horizon.
The following example will be used throughout this section to illustrate the method.

\begin{example} \label{ex:Bob}
Consider the end of Bob's day. The history of Bob's actions is: [work, shop, watch TV, nap, eat, water the plants, read]. Bob's state is represented by $5$ binary features: hungry, happy, tired, fridge, fuel. Fridge and fuel means respectively that the fridge is full and that the car's fuel level is full. Bob's final state is: ($\neg$hungry, happy, tired, $\neg$fridge, $\neg$fuel) (for the sake of conciseness, Bob's states are represented by a boolean 5-tuple. Thus, Bob's last state is: $(0,1,1,0,0)$). 
The environment is deterministic and the predicate under study is ``Bob is not hungry". We are looking for the most important actions for Bob not to be hungry. Starting from the final state, the most important action in the near past is `eat'. We are interested in its associated state, i.e.
        the state in the history before doing the action `eat', which is assumed to be $(1,0,0,1,0)$. The new predicate deduced from this state is ``Bob is hungry and has a full fridge". In the near past of $(1,0,0,1,0)$, the `shop' action is the most important one (among work, shop, watch TV and nap) for respecting this new predicate. To sum up, we can say that the reason that Bob is not hungry in the final state 
is that he went shopping (to fill his fridge) and then ate.
\end{example}

Before describing the B-HXP method in detail, we introduce some notations. $H = (s_0, a_0, s_1, ..., a_{k-1}, s_k)$ denotes \emph{a length-$k$ history}, with $H_i=(s_i,a_i)$ denoting the state and action performed at time $i$, and for $i<j$, $H_{(i, j)}$ denotes the sub-sequence $H_{(i, j)} = (s_i, a_i, ..., s_j)$. 
To define the near past of a state in $H$, it is necessary to introduce the maximum length of sub-sequences: $l$. This length must be sufficiently short to allow importance scores to be calculated in a reasonable time. The value of $l$ depends on the RL problem being addressed, and specifically on the maximal number of possible transitions from any observable state-action pair, namely $b$. It follows that the lower $b$ is, the higher $l$ can be chosen to be.

To provide explanations for long histories, we need a way of defining new intermediate predicates (such as Bob is hungry and the fridge is full in Example~\ref{ex:Bob}). For this we use Probabilistic Abductive eXplanations, shortened to PAXp~\cite{DBLP:journals/ijar/IzzaHINCM23}. 
The aim of this formal explanation method is to explain the prediction of a class $c$ by a classifier $\kappa$ by providing an important set of features among $\mathcal{F}$. Setting these features guarantees (with a probability at least $\delta$) that the classifier outputs class $c$, whatever the value of the other features. 
A classifier maps the feature space into the set of classes: $\kappa: \mathbb{F} \rightarrow \mathcal{K}$. We represent by $\mathbf{x}=(x_1, ..., x_n)$ an arbitrary point of the feature space and $\mathbf{v}=(v_1, ..., v_n)$ a specific point, where each $v_i$ has a fixed value of domain $D_i$.  \cite{DBLP:journals/ijar/IzzaHINCM23} defines a weak PAXp as a subset of features for which the probability of predicting the class $c=\kappa(\mathbf{v})$ is above a given threshold $\delta$ 
when these features are fixed to the values in $\mathbf{v}$. A PAXp is simply a subset-minimal weak PAXp.

\begin{definition}[PAXp \cite{DBLP:journals/ijar/IzzaHINCM23}] \label{def:PAXp}
Given a threshold $\delta \in [0,1]$, a specific point $\mathbf{v} \in \mathbb{F}$ and the class $c \in \mathcal{K}$ such that $\kappa(\mathbf{v}) = c$,  $\mathcal{X} \subseteq \mathcal{F}$ \emph{is a weak PAXp} if:
 $$Prop(\kappa(\mathbf{x}) = c \mid \mathbf{x}_\mathcal{X} = \mathbf{v}_\mathcal{X}) \ge \delta$$ 
where $\mathbf{x}_\mathcal{X}$ and $\mathbf{v}_\mathcal{X}$ are the projection of $\mathbf{x}$ and $\mathbf{v}$
onto features $\mathcal{X}$ respectively and $Prop(\kappa(\mathbf{x})=c \mid \mathbf{x}_\mathcal{X} = \mathbf{v}_\mathcal{X})$
is the proportion
of the states $\mathbf{x} \in \mathbb{F}$
satisfying 
$\mathbf{x}_\mathcal{X} = \mathbf{v}_\mathcal{X}$,
that the classifier maps to $c$, in other words 
$|\{\mathbf{x} \in \mathbb{F} \mid \mathbf{x}_\mathcal{X}{=}\mathbf{v}_\mathcal{X}$ and $\kappa(\mathbf{x}){=}c\}| / |\{\mathbf{x} \in \mathbb{F} \mid \mathbf{x}_\mathcal{X}{=}\mathbf{v}_\mathcal{X}\}|$.

The \emph{set of all weak PAXp’s} for $\kappa(\mv)=c$ wrt the threshold $\delta$ is denoted WeakPAXp$(\kappa, \mv,c, \delta, \F)$.

$\mathcal{X} \subseteq \mathcal{F}$ \emph{is a PAXp} if it is a subset-minimal weak PAXp. 
The \emph{set of all PAXp’s} for $\kappa(\mv)=c$ wrt the threshold $\delta$ is denoted  PAXp$(\kappa, \mathbf{v},c, \delta, \F)$.
\end{definition}

The idea is to use PAXp's to redefine the predicate to be studied for the next sub-sequence as we progress backwards. 
In order to fit into the PAXp framework we define the classifier $\kappa_{s,\pi,p,d,k}$ as a binary classifier based on the utility of the state $s$. We note $u^k_{d,\pi,p}(s)=u_d(succ^k_{\pi,p}(\{(s,1)\}))$, \emph{the utility of} $s$ wrt $d$ given an horizon $k$, a policy $\pi$ and a transition function $p$.
The class $\kappa_{s,\pi,p,d,k}(\mathbf{x})$ of any state $\mathbf{x}$ is the result of a comparison between the utility of $s$ and the utility of $\mathbf{x}$ for the respect of $d$.

\begin{definition}[B-HXP classifier]\label{def:classifier}
Given a state $s$, a policy $\pi$, a transition function $p$, a predicate $d$ and a horizon $k$.
The \emph{B-HXP classifier}, denoted $\kappa_{s,\pi,p,d,k}$, is a function such that:
for all $\mathbf{x}\in \mathcal{S}$, 
$$\kappa_{s,\pi,p,d,k}(\mathbf{x})= \left\{\begin{array}{ll}
True & \mbox{ if } u^k_{d,\pi,p}(\mathbf{x})\geq u^k_{d,\pi,p}(s)\\
False & \mbox{ otherwise }\end{array}\right.$$ 
\end{definition}

This classifier is specific to B-HXP. The utility threshold value depends on the state $s$ which is the state associated with the most important action in the sub-sequence studied. It is used to generate a predicate $d'$ which reflects a set of states at least as useful as $s$ 
(with a probability of at least $\delta$) for the respect of $d$. The predicate $d'$ can then be seen as a sub-goal for the agent in order to satisfy $d$.

To assess whether a subset $\X \subseteq \mathcal{F}$ is a PAXp, it is necessary to calculate the utility of each state having this subset of features, which involves using the agent's policy $\pi$ and the environment transition function $p$. A weak PAXp is then a sufficient subset of state features which ensures that a state utility is greater than or equal to 
the utility of $s$ with probability at least $\delta$.
The new predicate is defined as the disjunction of every possible PAXp from s. 
\begin{definition}\label{def:predicate}
Given a state $s =(s_1,\ldots, s_n) \in \mathcal{S}$, a B-HXP classifier $\kappa$ on $\mathcal{S}$,  \emph{the predicate PAXpred associated with} $s$ for a given threshold $\delta$ is:
		$$\text{PAXpred}_{\kappa}(s,\delta) =  \bigvee_{\mathcal{X} \in \text{PAXp}(\kappa, s, True, \delta, \mathcal{S})} \left(\bigwedge_{f_i \in \mathcal{X}} f_i = s_i\right)$$
\end{definition}
    
\begin{continueexample}{ex:Bob}
    \begin{itshape}
    For this history, $\delta$ is set to $1$ and $l$ is $4$. The first sub-sequence studied is: [nap, eat, water the plants, read]. The most important action relative to the achievement of ``Bob is not hungry" is `eat', with a score of, say, $0.5$ and its associated state, say, $(1,0,0,1,0)$. We extract a PAXp, which includes the features \emph{fridge} and \emph{hungry} set to $1$. This means that, whatever the values of the other features, a state with \emph{fridge} and \emph{hungry} set to $1$ has (with $100\%$ probability) a utility greater than or equal to $0.5$. Indeed, this is the only PAXp, so PAXPred is (\emph{fridge}=1 $\land$ \emph{hungry}=1).
    Now, we study the sub-sequence [work, shop, watch TV, nap], in which the most important action to achieve the new intermediate predicate is `shop'.
    \end{itshape}
\end{continueexample}

    \begin{figure*}
        \centering
        \includegraphics[scale=0.04]{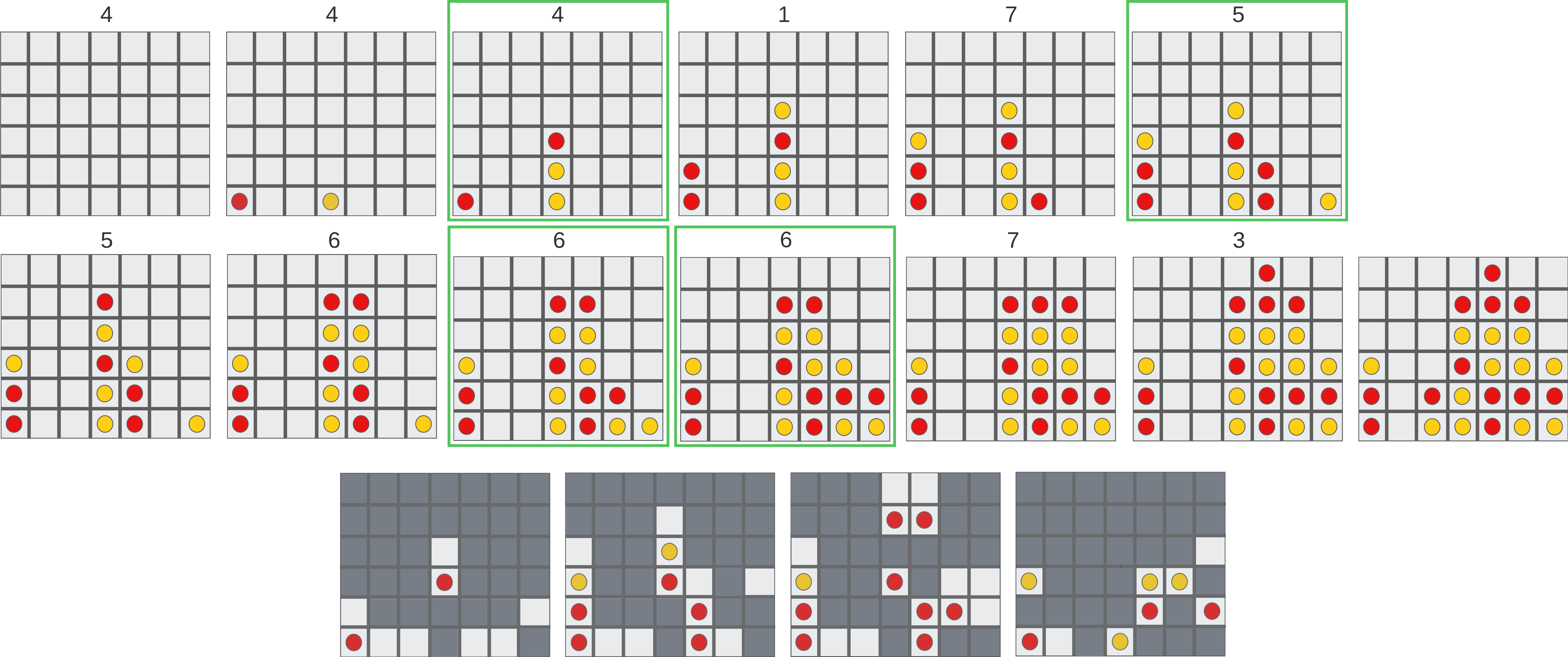}
        \caption{B-HXP for the \emph{win} predicate in the C4 problem. Above: an input history of 13 states $s_0,\ldots, s_{12}$ and 12 moves (where a move is the choice of a column by the agent (yellow) to which the environment (red) responds). Below: the predicates
        found by B-HXP corresponding to the four important moves it finds,
        each highlighted by a green frame in the history. \\ }
        \label{fig:C4_example}
    \end{figure*}

In the backward analysis of $H$, the change of predicate allows us to look at a short-term objective to be reached, thus keeping the calculation of HXP reasonable.
Our method is explained in pseudo-code in Algorithm~\ref{alg:backward_HXP}. This algorithm allows us to go backwards through the history $H$, successively determining in each sub-sequence studied, the important action and its associated state predicate. The $\argmax$ function is used to find, in a given sub-sequence $H_{(i,j)}$, the most important action $a$, its associated state $s$, and its index in $H$. The latter is used to determine the next sub-sequence to consider. The \emph{PAXpred} function is used to generate the new predicate to study in the next sub-sequence, based on Definition~\ref{def:predicate}. The process stops when all actions have been studied at least once, or when the utility of the most important action in the current sub-sequence is $0$. 
Finally, the algorithm returns a list of important actions and the different predicates found.

    \begin{algorithm}
    \caption{B-HXP algorithm}\label{alg:backward_HXP}
    \textbf{Input: }\text{history $H$, maximal sub-sequence length $l$,}\\ 
    \text{agent's policy $\pi$, predicate $d$, transition function $p$,} \\ \text{probability threshold $\delta$, state space $\mathcal{S}$}
    \\
    \textbf{Output: }\text{important actions $A$, predicates $D$}
    \begin{algorithmic}
    \State $A \gets []$ ;  $D \gets []$; $ u \gets 1$ 
    \State $i_{max} \gets len(H)$ ; $i_{min} \gets \max(0, i_{max} - l)$
    \While{$i_{min}\neq0 \text{ and } u\neq0$}
        \small\State 
          $i,s,a\gets$ $\argmax_{\tiny\begin{array}{c}
          i\in [i_{min}, i_{max}]\\
          (s,a)=H_{i}\end{array}} imp^l_{d,\pi,p} (s,a)$ 
        \State $u \gets u^l_{d,\pi,p}(s)$
        \State $d \gets \text{PAXpred}_{\kappa_{s,\pi,p,d, l}} (s, \delta)$ 
        \State $A.append(a)$ ; $D.append(d)$
        \State $i_{max} \gets i$ ; $i_{min} \gets \max(0, i_{max} - l)$
    \EndWhile
\\
    \Return $A, D$
    \end{algorithmic}
    \end{algorithm}
    
With B-HXPs, it is interesting to note that the number of actions to be presented to the user is not fixed. In the worst-case scenario, a user could end up with an explanation that refers to all actions as important. This would happen if it was always the last action in each subsequence which is the most important for achieving the current predicate. However, this problem was not observed in our experiments.

B-HXP provides the user with an explanation even for long histories. Our motivation behind B-HXP is to reduce the complexity of calculating important actions in comparison with forward HXP.
In the remainder of this section, we justify the choices made in defining B-HXP by analysing in detail the theoretical complexity of certain subproblems encountered by HXP and B-HXP.
\begin{proposition}
     Given a policy $\pi$, a transition function $p$ 
     and predicate $d$,
     the importance score computation of an action $a$ at horizon $k$ from any state $s$ for the respect of $d$, $imp^k_{d,\pi,p}(s,a)$, is \#P-hard. 
\end{proposition}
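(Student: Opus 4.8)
The plan is to establish $\#$P-hardness by a polynomial-time (Turing) reduction from $\#$SAT, the canonical $\#$P-complete problem of counting the models of a propositional formula. Given a formula $\phi$ over variables $X_1,\dots,X_n$, I would build an MDP in which the probability of reaching, at horizon $k$, a state satisfying a well-chosen predicate $d$ is exactly $(\#\phi)/2^n$, where $\#\phi$ denotes the number of satisfying assignments of $\phi$; reading this probability off a single importance score then recovers $\#\phi$.

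Concretely, I would use $n$ binary features $f_1,\dots,f_n$ to record a (partial) truth assignment, together with one counter feature $f_{n+1}$ of domain $\{0,\dots,n\}$ recording how many variables have been fixed. The initial state $s$ has counter $0$. The action $a$ and the deterministic policy $\pi$ select, at each counter value $i<n$, a single ``assign'' action whose stochastic transition $p$ branches with probability $\tfrac12$ to the two states that set $f_{i+1}$ to $0$ or to $1$ and increment the counter. Crucially the branching is produced by the non-deterministic transition function, not by the policy, so $\pi$ remains deterministic as required. With the first branch caused by $a$ and $k=n-1$ further applications of $next_{\pi,p}$, the set $succ_{\pi,p}^{k}(S_{(s,a)})$ is precisely the uniform distribution over the $2^n$ complete assignments, each a distinct state carrying probability $1/2^n$ (distinct branching choices yield distinct feature vectors, so no two paths collapse to the same state). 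Taking $d=\phi$ — a legitimate predicate, since predicates may be arbitrary propositional formulas over the literals $l_{i,j}$ and are evaluable in linear time — the definition of utility gives $u_d(succ_{\pi,p}^{k}(S_{(s,a)}))=(\#\phi)/2^n$.

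To convert this utility into the importance score, I would give $s$ exactly one further action $a'$ whose transition leads deterministically to an absorbing ``rejecting'' state that never satisfies $d$; then the average over $A(s)\setminus\{a\}=\{a'\}$ contributes $0$, so $imp^{k}_{d,\pi,p}(s,a)=(\#\phi)/2^n$. Since $\#\phi=2^n\cdot imp^{k}_{d,\pi,p}(s,a)$ is obtained by one call to an oracle for the importance score followed by a polynomial amount of arithmetic, and since hardness already holds for this particular constructed state $s$, computing the importance score is $\#$P-hard.

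The main obstacle is not the arithmetic but the input model: the reachable state set is exponential, because each leaf must remember the full assignment in order to evaluate $\phi$, so the reduction runs in polynomial time only if the MDP is given \emph{succinctly} — with $s$, $a$, $d$ and $k$ written explicitly while $p$ and $\pi$ are supplied as polynomial-time procedures acting on the polynomial-size feature-vector encodings of states. I would therefore argue that this succinct representation is exactly the one under which HXP instances are specified (the same representation underlying the $\#$W[1]-hardness result), that the procedures realising $p$ and $\pi$ above are indeed polynomial-time, and that the true source of hardness is precisely the exponential branching of $succ_{\pi,p}^{k}$ aggregated by $u_d$ — a poly-state Markov chain would admit a polynomial dynamic program and could not be hard. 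A secondary point to check carefully is the horizon bookkeeping (the off-by-one between the branch caused by $a$ and the $k$ applications of $next_{\pi,p}$), which can be made robust by padding completed-assignment states with self-loops so that the reduction works for any $k\ge n-1$.
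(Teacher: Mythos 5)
Your proof is correct, but it takes a genuinely different route from the paper. The paper's proof is a one-line citation: it observes that computing $imp^k_{d,\pi,p}(s,a)$ was already shown \#W[1]-hard (parameterized by the horizon $k$) in the original HXP paper, and declares \#P-hardness a direct consequence. You instead give a self-contained polynomial-time Turing reduction from \#SAT: an MDP whose features record a growing truth assignment plus a counter, a deterministic policy whose single ``assign'' action branches stochastically (correctly keeping the non-determinism in $p$, not in $\pi$, as the framework requires), a horizon $k=n-1$ making $succ^{k}_{\pi,p}(S_{(s,a)})$ the uniform distribution over the $2^n$ complete assignments, the predicate $d=\phi$, and a second action $a'$ zeroing out the subtracted average so that $imp^{k}_{d,\pi,p}(s,a)=\#\phi/2^n$. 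Your version buys something real: strictly speaking, \#W[1]-hardness under fpt reductions does not by itself entail \#P-hardness of the unparameterized problem (an fpt reduction may run in time $f(k)\cdot\mathrm{poly}(n)$, which need not be polynomial once $k$ is part of the input), so the paper's ``direct consequence'' silently relies on the particular reduction in the cited work, whereas your argument is independent of it; you also correctly identify and address the succinct-representation issue, which matches the paper's own instance encoding (cf.\ the phrasing of instance size in its Lemma on importance scores). Two minor points to tighten: guarantee explicitly that the absorbing rejecting state falsifies $d$ for \emph{every} $\phi$ (e.g., conjoin $d$ with the counter condition $f_{n+1}=n$ and freeze the rejecting state's counter below $n$; as written, a vacuously satisfiable $\phi$ could hold at your dead state), and note that recovering $\#\phi$ assumes the oracle returns the score exactly as a rational, which is the standard convention for such Turing reductions and the one the paper itself uses in its locally-minimal PAXp hardness proof.
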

\begin{proof}
    This complexity is a direct consequence of the result in~\cite{saulieres:hal-04170188}: 
    given the length of the search horizon $k$ as a parameter, calculating the importance of an action is \#W[1]-hard.
\end{proof}

To avoid this computational complexity, the search horizon $l$ 
is chosen to be a small constant in the B-HXP calculation of importance scores. 
Using $b$ to denote the maximum number
of successor states from any given state 
(i.e. $b = \max_{s \in \mathcal{S}} |\{s' \in \mathcal{S} :
p(s | s,\pi(s)) > 0\}|$), there are a maximum of $b^l$ scenarios generated, and hence we have the following lemma. 

\begin{lemma}\label{lemma:imp_score}
Given a policy $\pi$ and a transition function $p$,
we assume that $b$, the maximum number of successor states
is a polynomial function of the instance size (i.e. the number of bits
necessary to specify the history together with $\pi$, $p$ 
and the predicate $d$). For 
a constant search horizon $l$, the computation of the importance score of any action $a$ from any state $s$ for the respect of $d$ ($imp^l_{d,\pi,p}(s,a)$)
and the utility of the state $s$ are in time
which is polynomial in the size of the instance.
\end{lemma}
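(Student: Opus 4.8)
The plan is to bound the total amount of computation by tracking how the recursive $succ$ function expands the set of (state, probability) pairs, and then argue each arithmetic operation is cheap. First I would analyse the utility $u^l_{d,\pi,p}(s) = u_d(succ^l_{\pi,p}(\{(s,1)\}))$. Starting from the singleton $\{(s,1)\}$, one application of $next_{\pi,p}$ produces at most $b$ pairs, since only the single policy action $\pi(s)$ is followed from each state and there are at most $b$ successors. By induction, $succ^n_{\pi,p}(\{(s,1)\})$ contains at most $b^n$ pairs, so after $l$ steps we have at most $b^l$ scenarios. The key observation is that with $l$ a fixed constant and $b$ polynomial in the instance size $N$, the quantity $b^l$ is itself polynomial in $N$ (a polynomial raised to a constant power). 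Thus the set produced by $succ^l$ has polynomially many elements, each a (state, probability) pair whose bit-length is polynomial.

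Next I would verify that \emph{building} this set and \emph{evaluating} $u_d$ on it both take polynomial time. Each invocation of $next_{\pi,p}$ on a set of size $m$ does $O(m)$ policy evaluations, each querying $\pi$ and enumerating the at most $b$ successors with their transition probabilities from $p$, and multiplying probabilities; since probabilities are given in the instance, each such step is polynomial-time arithmetic on polynomially-sized numbers. Summing the $l$ levels, the whole expansion costs time polynomial in $b^l$ and hence in $N$. To compute $u_d(S)$ we iterate over the at most $b^l$ pairs, evaluate the predicate $d$ on each state (which the excerpt notes is linear-time per state), and sum the probabilities of those satisfying $d$ — again polynomial overall. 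This establishes the utility claim.

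Then for the importance score $imp^l_{d,\pi,p}(s,a)$ I would simply note it is a finite combination of such utilities. From the definition, it requires computing $u_d(succ^l_{\pi,p}(S_{(s,a)}))$ and, for each alternative action $a' \in A(s)\setminus\{a\}$, the term $u_d(succ^l_{\pi,p}(S_{(s,a')}))$, then taking one average and one subtraction. There are at most $|A(s)|$ such terms, and $|A(s)|$ is bounded by the number of successors and hence by $b$, which is polynomial. Each support set $S_{(s,a')}$ has at most $b$ elements to start, so $succ^l_{\pi,p}(S_{(s,a')})$ has at most $b \cdot b^l = b^{l+1}$ pairs, still polynomial for constant $l$. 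Evaluating each utility is polynomial by the previous paragraph, and the final averaging and subtraction are trivial arithmetic. Hence the total is polynomial.

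The main obstacle, and the point that most needs care, is making the ``$b^l$ is polynomial'' argument airtight: one must be explicit that $l$ is a \emph{constant} (not part of the input) while $b$ is bounded by a polynomial $q(N)$, so that $b^l \le q(N)^l$ is a polynomial of fixed degree in $N$. If $l$ were allowed to grow with the input this bound would become exponential — indeed this is exactly the regime where the earlier Proposition gives \#P-hardness — so the proof hinges on the constancy of $l$. A secondary point worth stating plainly is that all intermediate probabilities have polynomially-bounded bit-length, so that arithmetic does not itself blow up; this follows because each probability is a product of at most $l$ input transition probabilities, so its description stays polynomial. With these two facts in place, the polynomial-time conclusion for both the utility and the importance score follows directly.
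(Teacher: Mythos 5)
Your proof is correct and follows essentially the same route as the paper, which in fact gives no separate proof of this lemma beyond the remark immediately preceding it (at most $b^l$ scenarios are generated, and $b^l$ is polynomial since $l$ is a constant and $b$ is polynomially bounded); your write-up just makes the bookkeeping explicit (per-level cost of $next$, linear-time predicate evaluation, polynomial bit-length of the probability products). One small repair: your claim that $|A(s)|$ is ``bounded by the number of successors and hence by $b$'' does not follow, since $b$ counts successor \emph{states} (many actions can share a successor, so $|A(s)|$ can exceed $b$); the step is saved immediately by noting that the action space is part of the instance specification, so $|A(s)|$ is trivially bounded by the instance size.
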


However, in comparison with forward HXP, it is necessary to compute intermediate predicates, in particular with PAXpred (Definition~\ref{def:predicate}). 
It is interesting to note that the B-HXP classifier (Definition~\ref{def:classifier}) $\kappa$ used in PAXpred 
can be evaluated in polynomial time because it is based on the calculation of the utility of a state (the main component in the importance score computation). Unfortunately, from a state $s$, the number of PAXp's according to $\kappa$, can be exponential in the size of $s$.

\begin{lemma}\label{lemma:PAXpred}
    Given a state $s$, in the worst case, PAXpred returns a predicate of size which may be exponential in the size of $s$. 
\end{lemma}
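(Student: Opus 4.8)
The plan is to establish Lemma~\ref{lemma:PAXpred} by exhibiting a concrete family of instances in which the number of PAXp's grows exponentially with the number of features $n$, since PAXpred is a disjunction over all PAXp's and each PAXp contributes a distinct conjunctive term. I would first recall that a PAXp is a subset-minimal weak PAXp, i.e. a subset-minimal set of features $\mathcal{X} \subseteq \mathcal{F}$ whose fixing to the values in $s$ guarantees $\kappa_{s,\pi,p,d,l}(\mathbf{x}) = True$ with probability at least $\delta$. The key observation is that the B-HXP classifier's decision boundary is governed by the utility function $u^l_{d,\pi,p}$, and the designer of the instance has freedom to engineer $\pi$, $p$, and $d$ so that the utility depends on the state's features in an essentially arbitrary (monotone threshold) way. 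Thus the combinatorial question reduces to: can a classifier have exponentially many subset-minimal sufficient feature sets?

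First I would fix $\delta = 1$ to simplify the proportion condition, so that a weak PAXp is simply a set $\mathcal{X}$ such that \emph{every} state agreeing with $s$ on $\mathcal{X}$ is classified $True$. Then I would construct an instance in which the set of $True$-classified states corresponds to a boolean function whose prime implicants (the subset-minimal sufficient fixings) number exponentially in $n$. A standard choice is a ``threshold'' or symmetric-type function: for example, take $n = 2m$ binary features and design the utility so that $\kappa(\mathbf{x}) = True$ exactly when at least $m$ of the features match the corresponding values in $s$. Each minimal sufficient fixing then corresponds to choosing any $m$ of the $2m$ features, yielding $\binom{2m}{m}$ PAXp's, which is $\Theta(4^m/\sqrt{m}) = 2^{\Theta(n)}$. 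Since PAXpred is the disjunction over all of these, its written size is exponential in the size of $s$.

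The concrete realization step is to show such a utility profile is achievable within the B-HXP framework, and this is where I expect the main obstacle. I must construct an explicit MDP fragment with a policy $\pi$, transition function $p$, and predicate $d$ so that $u^l_{d,\pi,p}(\mathbf{x})$ crosses the threshold $u^l_{d,\pi,p}(s)$ precisely on the desired majority region. One clean way is to make the horizon $l = 1$ and arrange that from state $\mathbf{x}$ the policy/transition deterministically leads (or assigns probability mass) to a successor satisfying $d$ iff $\mathbf{x}$ lies in the target region; the predicate $d$ itself can encode the majority condition directly since $d$ is an arbitrary propositional formula over the literals $l_{i,j}$. In fact, by allowing $d$ to be any boolean formula, one can sidestep most of the MDP machinery and let the utility at horizon $l=0$ or $l=1$ simply reflect membership in the majority region, so the bottleneck is really just checking consistency of the definitions rather than a hard construction.

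Finally I would verify the two claims needed to close the argument: that each of the $\binom{2m}{m}$ chosen feature sets is indeed subset-minimal (dropping any one feature leaves only $m-1$ fixed, so a disagreeing state reaching the non-majority region becomes consistent, breaking the guarantee), and that these are all genuinely distinct PAXp's, so the disjunction in Definition~\ref{def:predicate} has exponentially many terms and hence exponential written size. Since the size of $s$ is $\Theta(n) = \Theta(m)$, this establishes that PAXpred may return a predicate whose size is exponential in the size of $s$, as claimed. The argument is existential (one bad family suffices for a worst-case statement), so I would not attempt to characterize when the blow-up occurs, only to exhibit it.
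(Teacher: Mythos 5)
Your proposal is correct and takes essentially the same route as the paper: the paper also supports the lemma with a majority/threshold construction at horizon $l=1$ and $\delta=1$, in which the PAXp's are exactly the $\binom{n-1}{(n-1)/2}$ subsets of $(n-1)/2$ matching features, so PAXpred is a disjunction with exponentially many terms. The concrete MDP realization you identify as the main obstacle is resolved in the paper exactly as you sketch it --- a deterministic instance where the agent assigns one feature per step, the predicate is a sum-threshold goal, and the last feature's large domain value makes the goal reachable in one step precisely from the majority region.
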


To support this assertion, consider the following example.

\begin{example}
A state consists of $n$ features $f_1,\ldots,f_n$,
initially all set to 0. An agent's $i$th action consists
in assigning a value to $f_i$ from its domain $D_i$,
where $D_i=\{0,1\}$ ($i=1,\ldots,n{-}1$) and $D_n=\{0,1,n\}$.
The transition function is deterministic and the agent obtains a reward only by reaching any state $s^{goal}$ such that $\sum_{i=1}^{n} f_i \geq n + \frac{n{-}1}{2}$. Let the predicate $d$ be \emph{goal}, i.e. a predicate checking whether the agent reaches a state $s^{goal}$. 
Consider the state $s=(1,\ldots,1,0)$ after the agent has
made $n-1$ assignments. Clearly, assigning $n$ to $f_n$
establishes the predicate in one step. Moreover, any state with at least (n-1)/2 assignments of 1 among the first $n-1$ features, allows us to attain the goal in one step by assigning $n$ to the last feature $f_n$. Hence, the PAXp's of this predicate
$d$ from $s$ and for $l=1$, $\delta=1$ are precisely the subsets of 
$(n{-}1)/2$ literals of the form $f_i=1$ ($1 \leq i \leq n{-}1$).
There are $n{-}1 \choose (n{-}1)/2$ such PAXp's, so the corresponding predicate
PAXpred (for $\delta=1$) is of exponential size.
\end{example}

To exhaustively determine an intermediate predicate \emph{PAXpred}, therefore, exponential space in the size of the state $s$ is required.
One approximation to \emph{PAXpred} is to calculate only one AXp (i.e. a PAXp with $\delta = 1$). This reduces the problem to a more amenable co-NP problem~\cite{DBLP:journals/ai/CooperS23} where only a counterexample state $s'$ is needed to show that the set of features $\mathcal{X} \subseteq \mathcal{F}$ is not an AXp.
Unfortunately, this approximation yields predicates that are often too specific because of $\delta = 1$. 
Instead, in order to provide sparser intermediate predicates, the considered approximation consists in computing one PAXp, or more precisely one \emph{locally-minimal} PAXp. \emph{Locally-minimal} PAXp's is a particular class of weak PAXp which are not necessarily subset-minimal. Formally, a set of features $\mathcal{X} \subseteq \mathcal{F}$ is a \emph{locally-minimal} PAXp if 
$\X\in\text{WeakPAXp}(\kappa, \mathbf{v}, c, \delta,\F)$ and 
    for all $j \in \mathcal{X}$, $\X\!\setminus\!\{j\}\not\in\text{WeakPAXp}(\kappa, \mathbf{v}, c,\delta,\mathbb{F})$.

The \emph{findLmPAXp} algorithm~\cite{DBLP:journals/ijar/IzzaHINCM23} is used to calculate a \emph{locally-minimal} PAXp.
Although the approximation consists of calculating just one LmPAXp, it remains a hard counting problem.
We first prove hardness before explaining why this is nevertheless an improvement over the hardness of forward HXP.

\begin{lemma}\label{lemma:LmPAXp}
    Given a state $s$, the computation of 
    a locally-minimal PAXp is in FP$^{{\mathrm \#P}}$ 
    (i.e. in polynomial time using a \#P-oracle)
    and determining whether a given subset of features is 
    a locally-minimal PAXp is \#P-hard.
\end{lemma}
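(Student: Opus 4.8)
The plan is to treat the two assertions separately, the common engine being that a single weak-PAXp membership test is, up to one counting query, a \#P computation. First I would record that, by the remark preceding the lemma, the B-HXP classifier $\kappa = \kappa_{s,\pi,p,d,l}$ is computable in polynomial time, since its evaluation reduces to a state-utility computation, which is polynomial for the constant horizon $l$ by Lemma~\ref{lemma:imp_score}. Consequently, for any candidate set $\mathcal{Y} \subseteq \mathcal{F}$, the numerator of $Prop(\kappa(\mathbf{x}) = True \mid \mathbf{x}_\mathcal{Y} = \mathbf{v}_\mathcal{Y})$, namely $N(\mathcal{Y}) = |\{\mathbf{x} \in \mathcal{S} : \mathbf{x}_\mathcal{Y} = \mathbf{v}_\mathcal{Y},\ \kappa(\mathbf{x}) = True\}|$, is a \#P function: it counts the polynomially encodable objects $\mathbf{x}$ satisfying a polynomial-time predicate. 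The denominator $\prod_{i \notin \mathcal{Y}} |D_i|$ is computed directly, so a single \#P-oracle call together with the integer comparison $N(\mathcal{Y}) \geq \delta \prod_{i \notin \mathcal{Y}}|D_i|$ decides whether $\mathcal{Y}$ is a weak PAXp.

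For the upper bound I would then obtain a locally-minimal PAXp by greedy deletion: start from $\mathcal{X} = \mathcal{F}$ (a weak PAXp, since fixing all features leaves only $\mathbf{v}$ and $\kappa(\mathbf{v}) = True$), and while some $j \in \mathcal{X}$ has $\mathcal{X}\setminus\{j\}$ still a weak PAXp, delete such a $j$. The invariant ``$\mathcal{X}$ is a weak PAXp'' is preserved, the set strictly shrinks so the loop halts after at most $n$ deletions, and on termination no single removal preserves the weak-PAXp property, which is exactly local minimality. This uses $O(n^2)$ weak-PAXp tests, hence $O(n^2)$ \#P-oracle calls, placing the computation in $\mathrm{FP}^{\#\mathrm{P}}$; note that this argument needs no monotonicity of the weak-PAXp relation.

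For \#P-hardness of the decision problem ``is the given set $\mathcal{X}$ a locally-minimal PAXp?'' I would give a polynomial-time Turing reduction from \#SAT. Given $\phi(y_1,\ldots,y_m)$, introduce binary features $f_0,\ldots,f_m$ and a classifier with $\kappa(x_0,\ldots,x_m) = True$ iff $x_0 = 1$, or $x_0 = 0$ and $\phi(x_1,\ldots,x_m)$ holds. Take $\mathbf{v} = (1,0,\ldots,0)$, so that $c = \kappa(\mathbf{v}) = True$ irrespective of the satisfiability of $\phi$, and take the candidate set $\mathcal{X} = \{f_0\}$. Fixing $x_0 = 1$ forces class $True$ with proportion $1$, so $\mathcal{X}$ is always a weak PAXp, and hence $\mathcal{X}$ is locally-minimal iff its only proper subset $\emptyset$ is not a weak PAXp. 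The unconditioned proportion of $True$ equals $(2^m + \#\phi)/2^{m+1}$, where $\#\phi$ is the number of satisfying assignments, so $\emptyset$ fails to be a weak PAXp iff $\#\phi < 2^m(2\delta - 1)$. Choosing $\delta = (2^m + t + \tfrac12)/2^{m+1}$ makes ``$\mathcal{X}$ locally-minimal'' equivalent to ``$\#\phi \leq t$'', and a binary search over $t \in \{0,\ldots,2^m-1\}$ (each giving a valid $\delta < 1$) uses $O(m)$ oracle queries and recovers $\#\phi$ exactly; since $\kappa$ is evaluated in polynomial time and the instances differ only in the rational threshold $\delta$, this is a valid polynomial Turing reduction.

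To keep the reduction inside the B-HXP framework, I would finally realize the gadget classifier as a genuine B-HXP classifier: a deterministic MDP with horizon $l = 1$ in which the single action from $\mathbf{x}$ reaches a goal state satisfying the predicate $d$ exactly when $x_0 = 1$ or $\phi(x_1,\ldots,x_m)$ holds, so that $u^1_{d,\pi,p}(\mathbf{x}) \in \{0,1\}$ equals this Boolean; taking the reference state $s$ with $u^1_{d,\pi,p}(s) = 1$ yields $\kappa_{s,\pi,p,d,1}(\mathbf{x}) = True$ iff the condition holds. I expect the main obstacle to be precisely this hardness direction: engineering the local-minimality condition so that it collapses to a single one-sided \#SAT threshold (handling the power-of-two denominators and the half-integer offset so the band becomes a clean ``$\leq t$''), while simultaneously certifying that the contrived classifier is expressible through the utility-based B-HXP classifier. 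The upper bound, by contrast, is routine once weak-PAXp testing is identified as a one-query \#P computation.
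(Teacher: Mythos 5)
Your proof is correct, and while your upper-bound argument coincides with the paper's, your hardness reduction takes a genuinely different route. For membership in FP$^{\mathrm{\#P}}$ the paper does exactly what you do: observe via Lemma~\ref{lemma:imp_score} that $\kappa$ is polynomial-time evaluable, so testing whether $\mathcal{X}$ is a weak PAXp is a single \#P counting query, and run deletion-based greedy search from $\mathcal{X}=\mathcal{F}$. One small point in your favor: the paper performs a \emph{single pass} over the features (as in \emph{findLmPAXp}), whereas your while-loop with $O(n^2)$ oracle calls re-tests after every deletion; since WeakPAXp is not monotone for $\delta<1$, a single pass can in fact terminate at a set $\mathcal{X}$ for which some $\mathcal{X}\setminus\{j\}$ has become weak again, so your more expensive loop is the one that provably meets the stated definition of local minimality. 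For \#P-hardness the paper reduces from counting \textsc{perfect matchings} in a bipartite graph: features are the vertices of $V_1$, domains are their neighborhoods in $V_2$, the classifier outputs True iff all feature values are distinct, and one asks whether $\mathcal{X}=\emptyset$ is a (locally-minimal) PAXp for varying $\delta$ --- exploiting, as you do with $\mathcal{X}=\{f_0\}$, that local minimality collapses to a pure weak-PAXp threshold test ($\emptyset$ is locally-minimal iff it is weak, vacuously), and extracting the count by polynomially many queries. Your \#SAT gadget with the guard feature $f_0$, the half-integer offset $\delta=(2^m+t+\tfrac12)/2^{m+1}$, and binary search over $t$ is an equally valid polynomial Turing reduction, and it buys two things the paper's argument does not: it stays within binary domains, and --- more substantively --- your final step realizing the gadget as a genuine B-HXP classifier $\kappa_{s,\pi,p,d,1}$ via a deterministic one-step MDP closes a gap the paper leaves open, since the paper's all-different classifier is never shown to arise from the utility-based form of Definition~\ref{def:classifier}. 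Conversely, the paper's matching-based classifier is arguably more natural when domains have arbitrary sizes, and its presentation is terser for the same reason yours needed the threshold arithmetic.
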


\begin{proof}
    We first show the inclusion in FP$^{{\mathrm \#P}}$.
    From a specific state $s$, the search for a locally-minimal PAXp $\mathcal{X} \subseteq \mathcal{F}$ begins by initialising $\mathcal{X}$ to $\mathbf{F}$ 
    (the trivially-correct explanation consisting
    of all the features). Then, the following test is carried out for a feature $f_i \in \mathcal{X}$: if $f_i$ is removed from $\mathcal{X}$, does $\mathcal{X}$ remain a WeakPAXp? If so, $f_i$ is removed, otherwise retained. This test is performed for each feature of $\mathcal{X}$.
	Determining whether a subset $\mathcal{X} \subseteq \mathcal{F}$ is a WeakPAXp 
    amounts to counting the number of states $\mathbf{x}$ which match $s$ on $\mathcal{X}$ (i.e. $\mathbf{x}_\mathcal{X} = s_\mathcal{X}$)
    that are classified as True (i.e. such that $\kappa_{x,\pi,p,d,k}=$True). Lemma~\ref{lemma:imp_score}
    tells us that $\kappa$ can be evaluated in polynomial time,
    so WeakPAXp belongs to \#P.
    The search for a locally-minimal PAXp can thus be performed in polynomial time in the size of $s$, i.e. its number of features, using a \#P-oracle WeakPAXp.
 
    To prove \#P-hardness, it suffices to give a polynomial reduction from {\sc perfect matchings} which is a \#P-complete problem~\cite{DBLP:journals/siamcomp/Valiant79}.
    Consider a graph $G=(V,E)$ where $V$ is the set of vertices and $E$ is the set of edges. $V$ is decomposed into two parts $V_1$ and $V_2$ so that each edge has one end in $V_1$ and another in $V_2$. In other words, $G$ is a bipartite graph. A perfect matching is a set of edges such that no pairs of vertices has a common vertex.
    
    Let us define a set of $n$ features $f_i \in \mathcal{F}$ such that each feature $f_i$ corresponds to a vertex, say $i$, in $V_1$. We define the domain $D_i$ of each feature $f_i$ as the set of vertices in $V_2$ that are related to $i$ by $E$.
	In other words, an edge $e \in E$ is a possible assignment of a feature $f_i$ to a value $v_{j}$.
    The feature space $\F$ is modeled by $G$. Indeed, each point $\mathbf{x}=(f_1, \dots, f_n) \in \F$ can be obtained by assigning each feature in $V_1$ to a value in $V_2$. Thus, a state $\mathbf{x}$ is represented by a set of edges $E_\mathbf{x} \subseteq E$. 
    To compute WeakPAXp, the following classifier is used: 

    $$\kappa(\mathbf{x})= \left\{\begin{array}{ll}
    True & \mbox{ if } \forall i \neq j, f_i \neq f_j\\
    False & \mbox{ otherwise }\end{array}\right.$$
    
    This classifier outputs True for any state $\mathbf{x}$ comprising a distinct valuation for each feature. In $G$, such a state $\mathbf{x}$ is represented by a set of edges $E_\mathbf{x} \subseteq E$ in such a way that each pair of edges in $E_\mathbf{x}$ has no common vertices. Thus, $E_\mathbf{x}$ is a perfect matching.
    From a specific point $\mathbf{v}$ such that $\kappa(\mathbf{v}) = True$, we are interested in knowing whether $\mathcal{X} = \emptyset$ is a weakPAXp, for different values of $\delta$. Based on Definition~\ref{def:PAXp}, it is necessary to calculate the proportion of states classified as True to determine whether $\mathcal{X}$ is a weakPAXp.
    With $\mathcal{X} = \emptyset$, the total number of states that match $\mathcal{X}$ corresponds to the number of states in $\F$. This is easily obtained by multiplying the domain-sizes $|D_i|$, $|\F| = \Pi_{i=1}^n |D_i|$. 
	The number of states which match $\mathcal{X}$ and are classified by $\kappa$ to True, is obtained by counting the states which have a distinct valuation for each feature, i.e. by counting the perfect matchings in $G$.
 $\mathcal{X}=\emptyset$ is a weakPAXp iff it is a (locally-minimal) PAXp. 
 Thus, we have reduced the {\sc perfect matchings} problem to a polynomial number of calls to the locally-minimal PAXp problem. 
 Hence there is a polynomial-time Turing reduction from {\sc perfect matchings} to the locally-minimal PAXp problem, which is therefore \#P-hard.
\end{proof}

Consequently, the B-HXP complexity is deduced from the 
complexity of the computation of importance scores and the 
generation of the intermediate predicates.

\begin{proposition}\label{prop:B-HXP}
    The B-HXP computation is in FP$^{{\mathrm \#P}}$ and is \#P-hard.
\end{proposition}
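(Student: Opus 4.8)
The plan is to establish Proposition~\ref{prop:B-HXP} by combining the complexity results already obtained for the two main subroutines of Algorithm~\ref{alg:backward_HXP}, namely the importance score computation (Lemma~\ref{lemma:imp_score}) and the generation of an intermediate predicate via a locally-minimal PAXp (Lemma~\ref{lemma:LmPAXp}). The key observation is that a single execution of B-HXP consists of a polynomial number of iterations of the while loop, where each iteration performs a bounded amount of work built entirely from these two subroutines.

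First I would bound the number of iterations. Each pass through the while loop strictly decreases $i_{max}$ (since it is reset to the index $i < i_{max}$ of the selected action), and the loop halts once $i_{min} = 0$ or $u = 0$. Hence the number of iterations is at most $len(H)$, which is linear in the instance size. This reduces the problem to bounding the cost of one iteration.

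Next I would analyse a single iteration. The $\argmax$ step evaluates $imp^l_{d,\pi,p}(s,a)$ over at most $l+1$ candidate state-action pairs in the current sub-sequence; by Lemma~\ref{lemma:imp_score}, each such score (and the utility $u^l_{d,\pi,p}(s)$) is computable in polynomial time, so the whole $\argmax$ and the utility update are polynomial. The remaining step, the call to PAXpred, is replaced in the implemented method by the computation of one locally-minimal PAXp; by Lemma~\ref{lemma:LmPAXp}, this lies in $\mathrm{FP}^{\#\mathrm P}$, i.e. it is computable in polynomial time with a $\#$P-oracle. Composing a polynomial number of iterations, each using polynomially many polynomial-time steps plus oracle calls from $\mathrm{FP}^{\#\mathrm P}$, keeps the overall computation in $\mathrm{FP}^{\#\mathrm P}$, since $\mathrm{FP}^{\#\mathrm P}$ is closed under polynomially-bounded composition.

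For the lower bound, I would argue that the $\#$P-hardness is inherited directly from Lemma~\ref{lemma:LmPAXp}: determining the intermediate predicate requires solving the locally-minimal PAXp problem, which is $\#$P-hard, and this computation is an unavoidable step of B-HXP. A clean way to phrase this is a Turing reduction from the locally-minimal PAXp problem to B-HXP, instantiating the history so that the first intermediate predicate produced by PAXpred encodes the hard instance. The main obstacle I anticipate is the lower-bound direction rather than the membership: I must ensure that the reduction from Lemma~\ref{lemma:LmPAXp} can be embedded into a genuine B-HXP instance (a valid history together with a policy $\pi$ and transition function $p$ consistent with the constructed classifier $\kappa$), so that one B-HXP call actually forces the hard PAXp computation; the upper bound, by contrast, is a routine composition of the two preceding lemmas.
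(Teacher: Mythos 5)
Your proposal takes essentially the same route as the paper: the paper's proof is a two-line deduction that membership in FP$^{\#\mathrm{P}}$ follows from composing Lemma~\ref{lemma:imp_score} (polynomial-time importance scores and utilities over polynomially many iterations) with Lemma~\ref{lemma:LmPAXp} (one locally-minimal PAXp per iteration, computable with a \#P-oracle), and that \#P-hardness is inherited ``in particular due to the complexity of the generation of a new predicate''. The embedding concern you flag for the lower bound (realising the hard classifier of Lemma~\ref{lemma:LmPAXp} as an actual B-HXP instance with a history, policy and transition function) is legitimate but is left equally implicit in the paper, so your more cautious treatment is, if anything, slightly more complete than the original.
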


\begin{proof}
    From Lemma~\ref{lemma:imp_score} and Lemma~\ref{lemma:LmPAXp}, we deduce that the computational complexity of B-HXP is in FP$^{{\mathrm \#P}}$
    and is \#P-hard, in particular due to the complexity of the generation of a new predicate. 
\end{proof}

The computation of a B-HXP and the importance score computation in forward HXP are both \#P-hard. But, it is important to note that counting 
states (in B-HXP) is less computationally expensive than counting 
scenarios (i.e. a sequence of $k$ states).

We can provide a finer analysis by studying fixed-parameter
tractability in $n$, the size of a state.
A problem is fixed parameter tractable (FPT) with respect to parameter $n$
if it can be solved by an algorithm running in time $O(f(n)\times N^h))$, 
where $f$ is a function of $n$ independent of the size $N$ of the instance,
and $h$ is a constant~\cite{DBLP:series/mcs/DowneyF99}.

\begin{proposition} Given a sequence of $k$ states, with each state of size $n$,  a policy $\pi$, a predicate $d$, a transition function $p$, a threshold $\delta$, a constant length $l$ and $b$, the maximum number of successor states from any given state using $\pi$ which is assumed to be polynomial in the instance's size, the complexity of finding a B-HXP is FPT in $n$.
\end{proposition}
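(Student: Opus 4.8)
The plan is to show that the total running time of Algorithm~\ref{alg:backward_HXP} factorises as a function of $n$ alone times a polynomial in the instance size $N$, by isolating the only genuine source of exponential blow-up — the enumeration of the feature space $\mathbb{F}$ — and observing that its cardinality depends only on $n$ (through the fixed feature domains), whereas every elementary operation costs only $\text{poly}(N)$. First I would bound the number of iterations of the main while-loop: each pass resets the right endpoint $i_{max}$ of the studied window to the index of the selected action and slides the window backward, so the length-$k$ history is traversed from $s_k$ down to $s_0$ and every action is studied at least once; hence there are at most $k \le N$ iterations.

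Next I would analyse the cost of a single iteration, which has two parts. The $\argmax$ ranges over a window of at most $l+1$ (a constant) state-action pairs, and by Lemma~\ref{lemma:imp_score} each importance score $imp^l_{d,\pi,p}(s,a)$, as well as the utility $u^l_{d,\pi,p}(s)$, is computable in time polynomial in $N$, since $l$ is constant and $b$ is polynomial in $N$; thus selecting the most important action and its associated state costs $\text{poly}(N)$. The second, genuinely hard, part is the construction of the new intermediate predicate via PAXpred.

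For the predicate construction I would note that, whether one computes the exact PAXpred or — as in the practical method — a single locally-minimal PAXp via \emph{findLmPAXp}, every step reduces to deciding membership in WeakPAXp, i.e. to counting the states $\mathbf{x}\in\mathbb{F}$ that agree with $s$ on a candidate subset $\mathcal{X}$ and satisfy $\kappa_{s,\pi,p,d,l}(\mathbf{x})=True$. Such a count is obtained by enumerating the at most $|\mathbb{F}| = \prod_{i=1}^n |D_i|$ relevant assignments and evaluating $\kappa$ on each, and by Lemma~\ref{lemma:imp_score} each evaluation costs $\text{poly}(N)$. A locally-minimal PAXp requires $O(n)$ such counts, and even the exact PAXpred inspects at most $2^n$ subsets, so the whole predicate construction costs at most $(2^n + n)\cdot|\mathbb{F}|\cdot\text{poly}(N)$.

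Finally I would collect the bounds. Setting $f(n) = (2^n + n)\prod_{i=1}^n|D_i| \le (2^n+n)\,D_{\max}^{\,n}$, every exponential factor — the feature-space size $|\mathbb{F}|$, the number of inspected feature subsets, and the size of the returned predicate — is absorbed into $f(n)$, a function of $n$ only; multiplying by the $\le k \le N$ iterations and the $\text{poly}(N)$ cost of each elementary operation gives a total running time $O(f(n)\cdot N^h)$ for a constant $h$, which is exactly the FPT form. The main obstacle is precisely this last point: Lemma~\ref{lemma:LmPAXp} shows that the WeakPAXp counting is $\#$P-hard, so no purely polynomial bound is possible, and the crux is to confine that hardness entirely to the parameter by arguing that the counted objects range over $\mathbb{F}$, whose cardinality $\prod_i|D_i|$ is a function of $n$ and the fixed domains and is independent of $N$. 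Care is also needed to check that writing the output stays within the FPT budget, which holds since a locally-minimal PAXp has size at most $n$, and the full disjunctive PAXpred has size bounded by a function of $n$.
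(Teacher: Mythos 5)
Your proposal is correct and follows essentially the same route as the paper's proof: confine the only exponential cost---the enumeration of the feature space during the LmPAXp/predicate-redefinition step---to a factor $f(n)$ depending on the state size alone, note that each importance-score and utility evaluation is polynomial since $l$ is constant and $b$ is polynomial (Lemma~\ref{lemma:imp_score}), and multiply by the at most $k$ iterations of the backward loop to obtain the $O(f(n)\cdot N^h)$ form. Your version is in fact more explicit than the paper's (bounding the number of WeakPAXp tests, the argmax cost, and the output size), but it is the same argument.
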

\begin{proof}
Recall that 
the input includes a length-$k$ history made up of $k$ states
of size $n$. 
An exhaustive search over
all possible alternative length-$k$ scenarios would
require $\Omega(nb^k)$ time (and hence is not FPT
in $n$ because of the exponential dependence on $k$,
even if $b$ is a constant). On the other hand, 
B-HXP need only exhaust over scenarios of constant
length $l$. The complexity of redefining the predicate
by finding one LmPAXp is $f(n)b^l$ for some function 
$f$ of $n$ (the state size). This redefinition of the
predicate must be performed at most $k$ times, giving
a complexity in $O(f(n)b^l k)$. It follows that
B-HXP is FPT in $n$ since $b^l k$ is a polynomial
function of the size of an instance.
\end{proof}




In short, B-HXP keeps the calculation of importance scores exhaustive, by cutting the length-$k$ history into sub-sequences of length $l$, starting from the end.
The most important action of a sub-sequence is retained, and its associated state is used to define $d'$, the new predicate to study, using \emph{locally-minimal} PAXp. $d'$ is then studied in a new sub-sequence. This process is iterated throughout the history. The next section presents examples of B-HXP.

The calculation of new predicates is computationally challenging but is feasible (using certain approximations: calculating just one rather than all PAXp's and, as we will see later, the use of sampling rather than exhaustive search over feature space). The resulting method B-HXP provides a novel approach to explaining histories.

\section{Experiments}

The experiments were carried out on 3 RL problems: Frozen Lake (FL),
Connect4 (C4) and Drone Coverage (DC)~\cite{DBLP:conf/icaart/SaulieresCB23}.
Q-learning~\cite{watkins1992q} was used to solve the FL problem, and Deep-Q-Network~\cite{mnih_human-level_2015} for C4 and DC.
The agents' training was performed using a Nvidia GeForce GTX 1080 TI GPU, with 11 GB of RAM. The B-HXP examples were run on an HP Elitebook 855 G8 with 16GB of RAM 
(source code available \cite{saulieres_2024_13120510}).
    
The first part of this section describes the problems and the studied predicates. 
The second part presents B-HXP examples. 
%
In the associated figures, the history is displayed over two lines. Each history is composed of 13 states and 12 actions. The action taken by the agent from a state is shown above it. Important actions and their states are highlighted by a green frame. The third line in figures~\ref{fig:C4_example},\ref{fig:DC_example} corresponds to the predicates generated during the B-HXP, where a dark grey cell means that this feature is not part of the predicate.
In Tables 1, 2 and 3, the importance scores given are w.r.t. either the initial predicate or the intermediate ones. 

    \begin{figure}[h]
        \centering
        \includegraphics[scale=0.06]{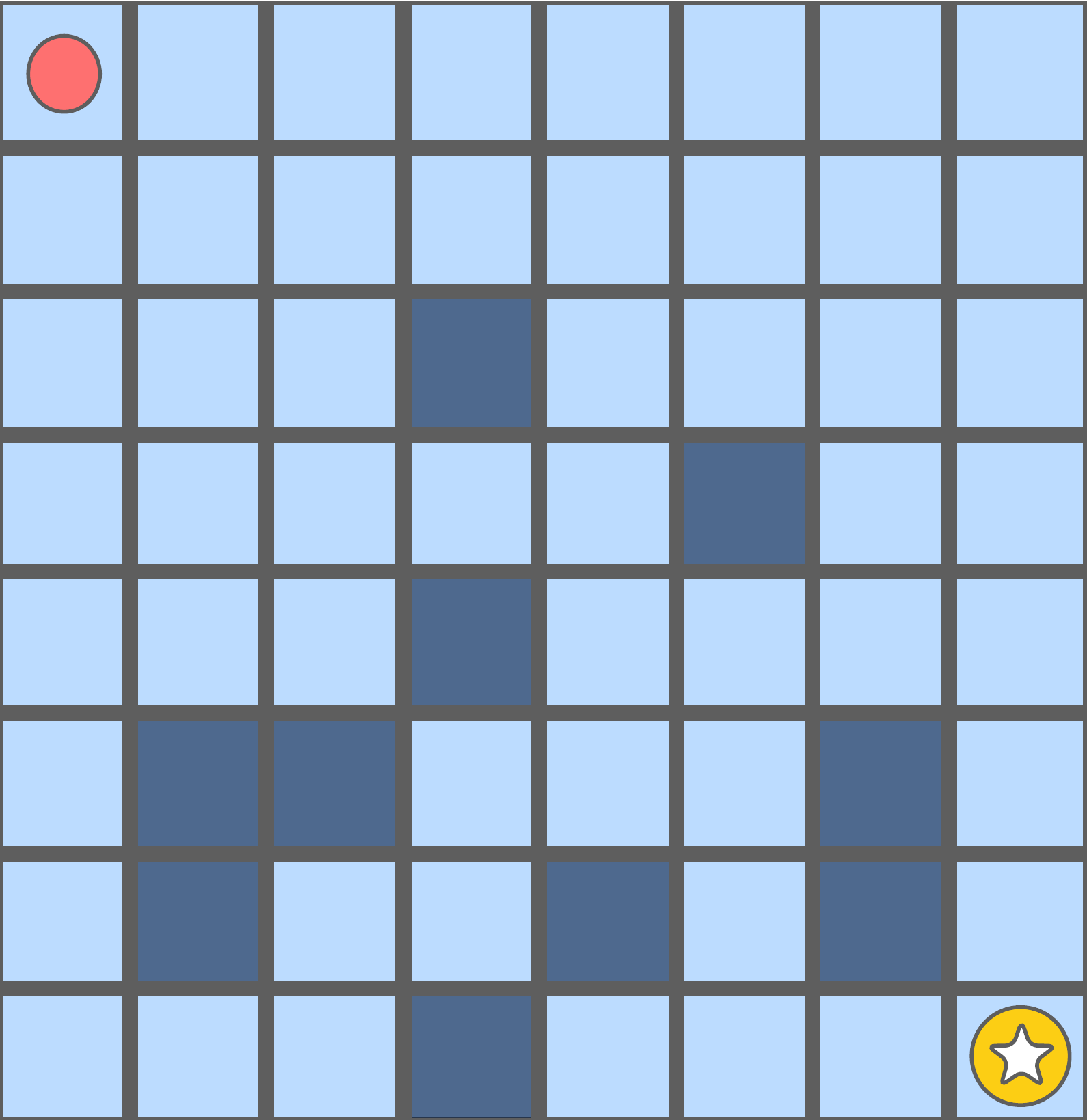}
        \caption{Frozen Lake $8\times8$ map.}
    \end{figure}
    \vspace{2mm}

    \begin{figure*}
        \centering
        \includegraphics[scale=0.036]{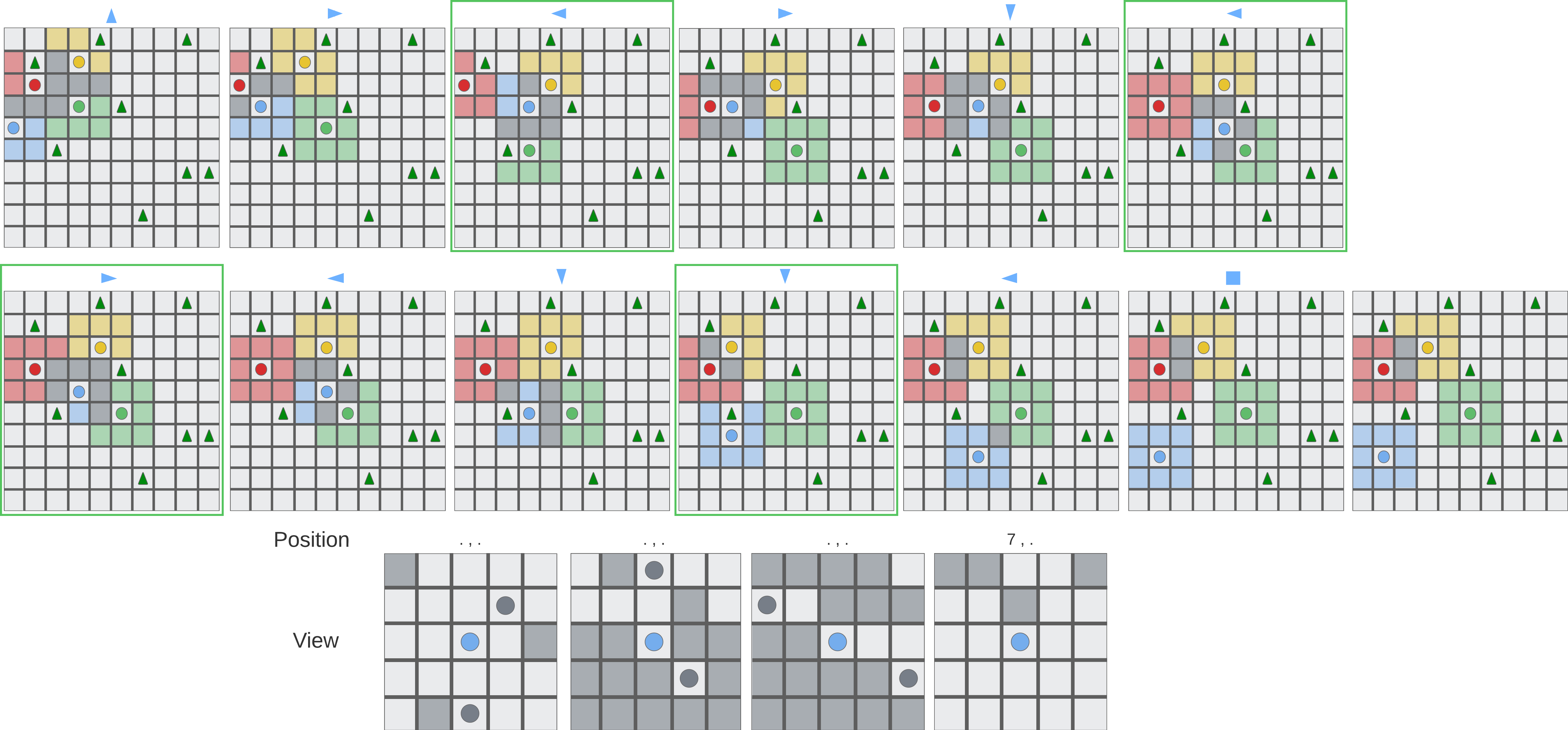} 
        \caption{B-HXP for the \emph{perfect cover} predicate in a DC problem history.
        The rightmost situation in the last row, corresponding to the first PAXpred predicate, states that the blue drone is in row 7 and that the light grey squares are free. The other intermediate predicates impose the relative positions of two other drones and that some squares are free. The drones are represented by dots and the trees by green triangles. \\ }
        \label{fig:DC_example}
    \end{figure*}
    \vspace{2mm}

    \subsection{Description of the problems}
    
\paragraph{Frozen Lake}
    
In this problem, the agent moves on the surface of a frozen lake (2D grid) to reach a certain goal position, avoiding falling into the holes. The agent can move in any of the 4 cardinal directions and receives a reward only by reaching the goal position. 
However, due to the slippery surface of the frozen lake, if the agent chooses a direction (e.g. \emph{up} as in the second state of Figure \ref{fig:FL_example}), it has $0.6$ probability to go in this direction and 0.2 to go towards each remaining direction except the opposite one (e.g., for \emph{up}, $0.2$ to go \emph{left} and $0.2$ to go \emph{right}, as occurred in the scenario of Figure \ref{fig:FL_example} where the agent moved right in the third state after performing \emph{up} in the second one).
The agent's state is composed of $5$ features: its position (P) and previous position (PP) on the map, the position of one of the two holes closest to the agent (HP), the Manhattan distance between the agent's initial position and his current position (PD), and the total number of holes on the map (HN).
This last feature was added as a check: since it is a constant it should never appear in the redefined predicate, which was indeed the case.

Predicates \emph{win}, \emph{holes} and \emph{region} were studied. They respectively determine whether the agent reaches the goal, falls into a hole or reaches a pre-defined set of map positions.

    \begin{figure}[h]
        \centering
        \includegraphics[scale=0.08]{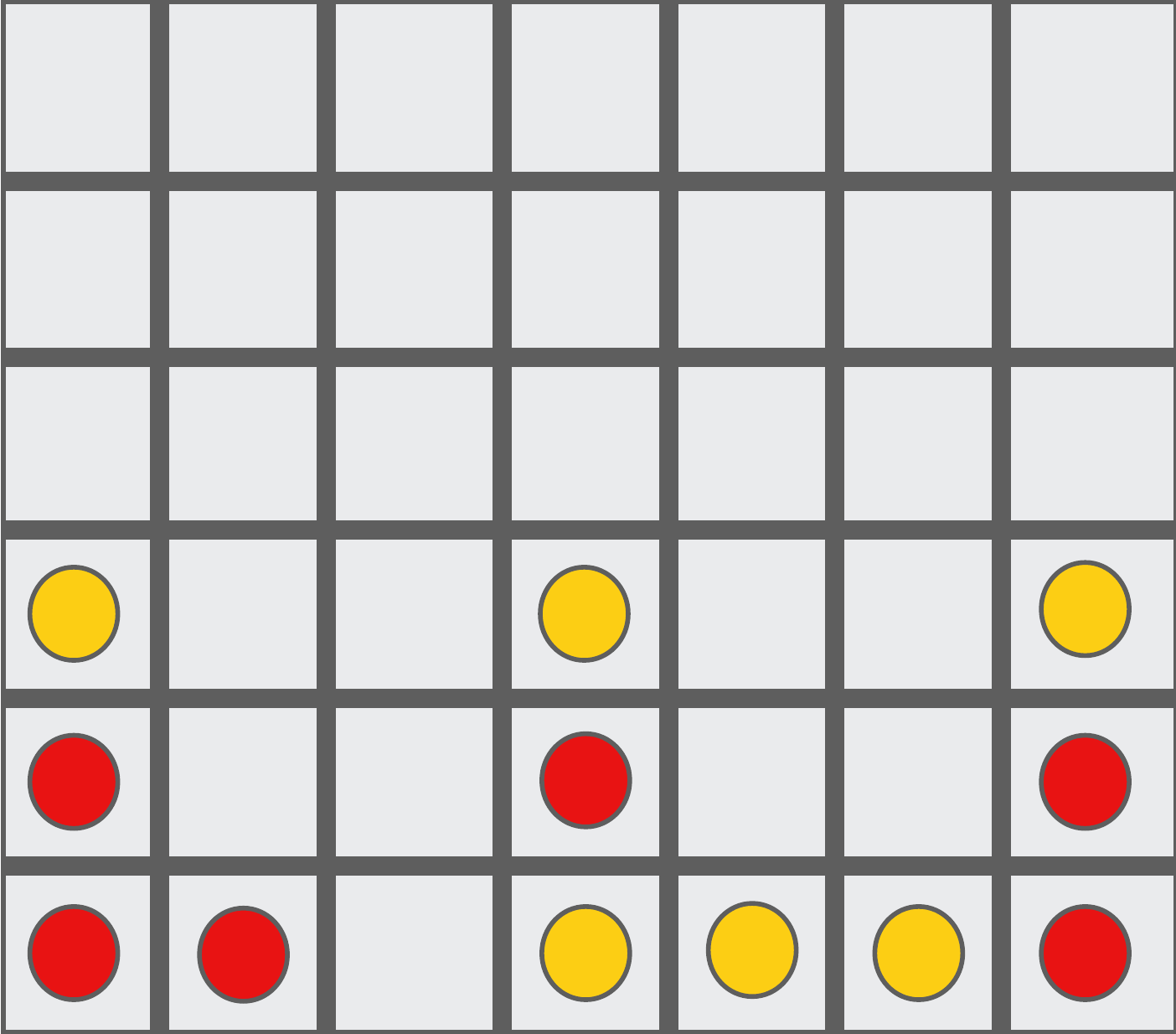}
        \caption{Connect4 board.}
    \end{figure}
    \vspace{2mm}

\paragraph{Connect4}
    
The Connect4 game is played on a 6 by 7 vertical board, where the goal is to align 4 tokens in a row, column or diagonal. Two players play in turn.
An agent's state is the whole board. An action corresponds to dropping a token in a column. The agent receives a reward only by reaching terminal states: $1$, $-1$, $0.5$ if the state represents an agent's win, loss or draw respectively. As the agent does not know the next move of the opponent, transitions are stochastic.

Five predicates were studied, including the obvious \emph{win} and \emph{lose}. For the other three, the initial state is compared with the final states of the generated scenarios. The predicate \emph{control mid-column} is satisfied when the agent has more tokens in the middle column. The predicates \emph{3 in a row} and \emph{counter 3 in a row} are satisfied respectively when the agent obtains more alignments of 3 tokens on the board, and prevents the opponent from obtaining more alignments of 3 tokens on the board.

\paragraph{Drone Coverage}

    \begin{figure}[h]
        \centering
        \includegraphics[scale=0.05]{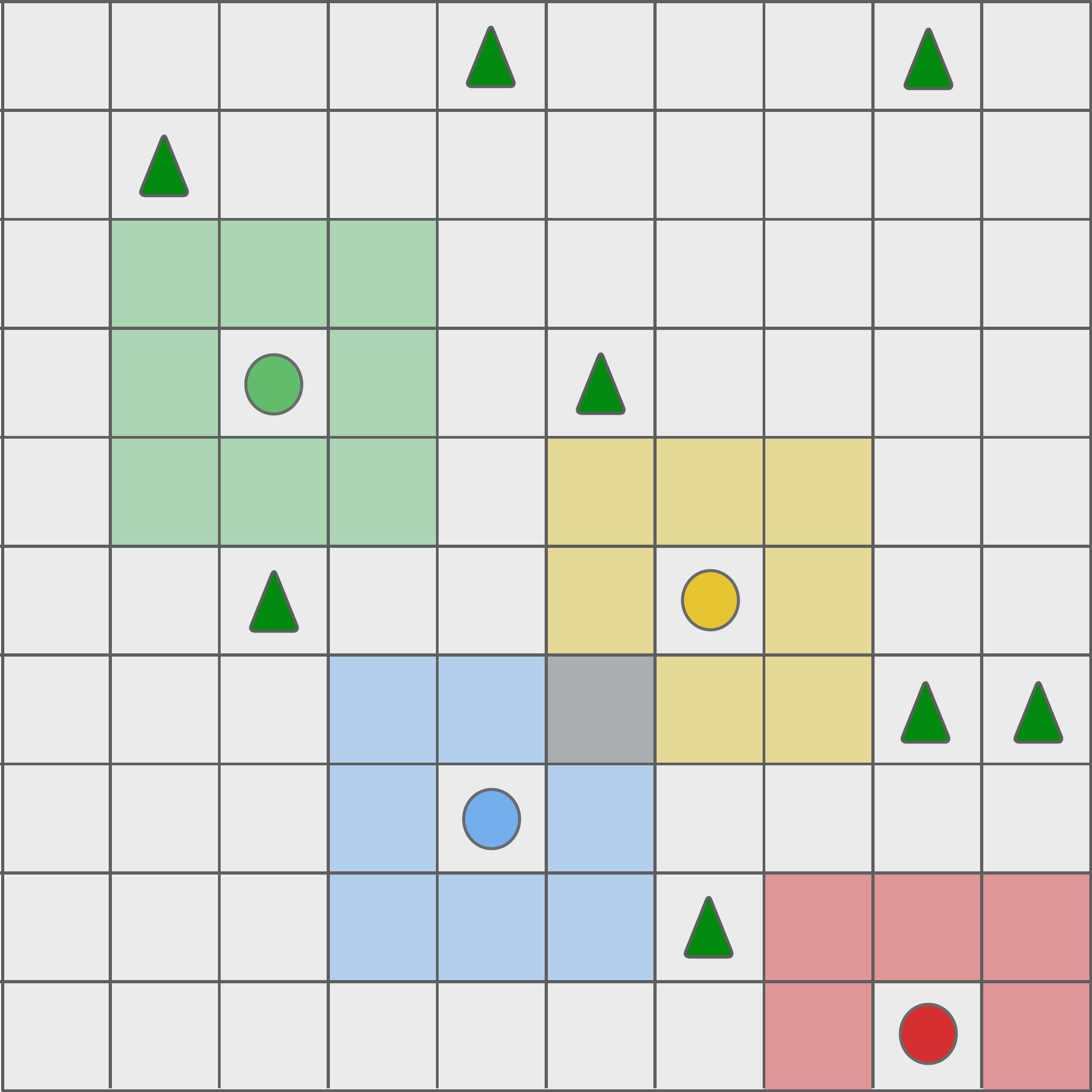}
        \caption{Drone Coverage $10\times10$ map.}
    \end{figure}
    \vspace{3mm}
    
In this problem, four drones must cover (observe) the largest area of a windy 2D map, while avoiding crashing into a tree or another drone. A drone can move in any of the 4 cardinal directions or remain stationary. A drone's cover is the $3\times3$ square centered on it. A drone's cover is optimal when it does not contain any trees and there is no overlap with the covers of the other drones. Hence, its reward is based on its cover and neighbourhood. Moreover, it receives a negative reward in the case of a crash.
An agent's state is made up of its view, a $5\times5$ image centered on it, and its position, represented by ($x,y$) coordinates. After an agent's action, the wind pushes the agent \emph{left, down, right, up} according to the following distribution: $[0.1, 0.2, 0.4, 0.3]$ unless the action is $stop$ or the agent and wind directions are opposite (in these cases the wind has no effect).

Ten predicates for the DC problem were studied (local and  global versions of): \emph{perfect cover, maximum reward, no drones, crash} and \emph{region}. Local versions concern a single agent, whereas the global versions concern all agents. Local versions of predicates allow to check whether an agent reaches a perfect cover (\emph{perfect cover}), gets a maximum reward (\emph{maximum reward}), has no drones in its view range (\emph{no drones}) and did not crash (\emph{crash}).  As an example, the \emph{local no drones} predicate checks whether the agent has no drones in its view range while the \emph{global no drones} predicate checks whether each agent has no drones in its view range. The map was divided into $4$ regions of dimension $5\times 5$ for the \emph{region} predicate. 
In the local version, the predicate checks whether the agent has reached a certain region.
In the global version, each agent must be in its own distinct region.
    
    \subsection{B-HXP examples}

    To provide B-HXPs in reasonable time, the $sample$ parameter, which corresponds to the maximum number of states observed for a feature evaluation in the \emph{findLmPAXp} algorithm~\cite{DBLP:journals/ijar/IzzaHINCM23}, i.e. the predicate generation, was set to $10$ in the following examples. In other words, to avoid an exhaustive search over $\mathbb{F}$, the proportion in Definition~\ref{def:PAXp} was computed based on $10$ samples.

    \begin{table}[h]
        \caption{Importance scores in the FL history~\ref{fig:FL_example}}
        \centering
        \setlength\tabcolsep{2pt}
        \vspace{4mm}
        \begin{tabular}{| c | c | c | c | c |}
        \hline
         {Predicate} & \multicolumn{4}{|c|}{Time-step / Importance score}\\
         \hline
         \multirow{2}{*}{\emph{win}} &  8 & 9 & 10 & 11\\ 
                                     & -0.001 & 0.04 & 0.012 & \textbf{0.114} \\
         \hline
         {\footnotesize$\text{PAXpred}_{\kappa}(s_{11}, 0.7)$}
          &  7 & 8 & 9 & 10\\ 
           (a.k.a. \emph{purple)}                         & 0.006 & -0.008 & \textbf{0.102} & 0.087 \\
         \hline
         {\footnotesize$\text{PAXpred}_{\kappa}(s_{9}, 0.7)$} &  5 & 6 & 7 & 8\\ 
            (a.k.a. \emph{green)}
            & -0.0003 & \textbf{0.0} & -0.001 & -0.0003 \\
        \hline
        \end{tabular}
        \label{tab:FL_imp_scores}
    \end{table}

    \paragraph{Frozen Lake}

    In Figure~\ref{fig:FL_example}, the agent is symbolized by a red dot, the dark blue cells are holes and the destination cell is marked by a star. 
    Each action performed is represented by a red arrow which corresponds to the direction chosen by the agent in the state described below. 
    In the states associated with the most important actions, the genericity of the computed predicates is represented by colored cells. A predicate is said to be \emph{generic} if it is respected by a large number of different states. The first predicate is represented in purple, the second in green. A colored cell (purple or green) means that the predicate is valid for all the states whose position (P) is this cell on the grid (i.e. the states whose feature value P is this cell). 

    A B-HXP (computed in 2 seconds) for a FL history is shown in Figure~\ref{fig:FL_example}, with $l{=}4$, $\delta{=}0.7$. 
    Importance scores are presented in Table~\ref{tab:FL_imp_scores}. The \emph{right} action linked to the penultimate state {\small$s_{11} = \{P=(7,8), PP=(6,8), HP=(6,7), PD=13, HN=10\}$} is the most important in the first sub-sequence studied in order to \emph{win}. The predicate {\small$PAXpred_\kappa(s_{11}, 0.7) = (PD=13)$}, computed from 
    $s_{11}$ with $\delta=0.7$, is named \emph{purple} hereafter. The states described by the predicate are shown in purple in Figure~\ref{fig:FL_example}. In the following sub-sequence, the \emph{down} action linked to state {\small$s_9 = \{P=(5,8), PP=(5,7), HP=(6,7), PD=11, HN=10\}$} is the most important to respect \emph{purple}. The predicate {\small$PAXpred_\kappa(s_9, 0.7) = (P=(5,8)) \wedge (PP=(5,7)) \wedge (HP=(6,7))$}, computed from $s_9$, is named \emph{green} (the states described are shown in green in Figure~\ref{fig:FL_example}).   
    We note that \emph{purple} describes more states than \emph{green}. The latter is not generic enough, which is reflected in the importance scores, which are close to 0: whatever the action, it is unlikely to respect this predicate after 4 time steps. The entire history is not explored when calculating the B-HXP, as the utility of the last state selected $s_6$ is 0. 
   The selected actions form a meaningful explanation when we look at the  predicates studied. However, the redefined predicates fairly quickly become very specific and probably of little help in explaining why the agent won.

    \begin{table}
        \centering
        \setlength\tabcolsep{12pt}
        \caption{Importance scores in the C4 history~\ref{fig:C4_example}}
        \vspace{4mm}
        \begin{tabular}{|c| c | c | c |}
        \hline
         {Predicate} & \multicolumn{3}{|c|}{\small Time-step / Importance score}\\
         \hline
         \multirow{2}{*}{\emph{win}} &  9 & 10 & 11\\
                                     & \textbf{0.726} & 0.099 & 0.16 \\
         \hline
         \multirow{2}{*}{\small$\text{PAXpred}_{\kappa}(s_{9}, 0.8)$} & 6 & 7 & 8\\ 
                                     & -0.006 & 0.03 & \textbf{0.113} \\
         \hline
         \multirow{2}{*}{\small$\text{PAXpred}_{\kappa}(s_{8}, 0.8)$} & 5 & 6 & 7\\ 
                                     & \textbf{0.003} & 0.0 & 0.0 \\
        \hline
         \multirow{2}{*}{\small$\text{PAXpred}_{\kappa}(s_{5}, 0.8)$} & 2 & 3 & 4\\ 
                                     & \textbf{0.003} & 0.0 & 0.0 \\
        \hline
        \end{tabular}
        \label{tab:C4_imp_scores}
    \end{table}

    \paragraph{Connect4}
    In Figure~\ref{fig:C4_example}, we are interested in the actions of the agent playing the yellows tokens. The other player is considered as the environment's response. 
    Each action performed by the agent is represented by a number which corresponds to the column number in which the agent has chosen to drop a token. 
    The predicates displayed on the third line have been calculated from the states associated with the most important actions.

    With $l$ set to 3 and $\delta$ to 0.8, a B-HXP (computed in 10 seconds) for a C4 history is shown in Figure~\ref{fig:C4_example}. A large part of the board is ignored in the PAXpred predicates (see the line below the history in Figure~\ref{fig:C4_example}), which gives the user an intuition of the type of states that the agent must reach. Importance scores are presented in Table~\ref{tab:C4_imp_scores}. 
    Almost all the actions returned are related to setting up the token alignment leading to victory, which is interesting because the predicate \emph{win} is only studied on the last three states of the history. The other predicates provide actions linked to achieving a \emph{win}. 
    The first redefined predicate (the rightmost image on the third line of Figure~\ref{fig:C4_example}) describes a partial board configuration: from positions satisfying this predicate, an agent following the learnt policy has at least 80\% chance of achieving a win in the final position.
    However, as in the FL B-HXP, apart from the predicate defined at time-step 9, the other predicates generated seem to be not sufficiently generic, which can be seen in the scores. 
    Indeed, the second redefined predicate (the second-from-right image in the last line of Figure~\ref{fig:C4_example}) is so specific as to uniquely determine the exact board configuration (given the height of columns and the number of tokens played). Nevertheless, $3$ out of the $4$ important actions returned set up the winning diagonal.
    
    \begin{table}
        \caption{Importance scores in the DC history~\ref{fig:DC_example}}
        \centering
        \setlength\tabcolsep{12pt}
        \vspace{4mm}
        \begin{tabular}{| c | c | c | c |}
        \hline
         {Predicate} & \multicolumn{3}{|c|}{\small Time-step / Importance score}\\
         \hline
         \multirow{2}{*}{\emph{perfect cover}} &  9 & 10 & 11\\
                                     & \textbf{0.114} & 0.063 & 0.056 \\
         \hline
         \multirow{2}{*}{\small$\text{PAXpred}_{\kappa}(s_{9}, 1.0)$} & 6 & 7 & 8\\ 
                                     & \textbf{0.008} & 0.006 & 0.002 \\
         \hline
         \multirow{2}{*}{\small$\text{PAXpred}_{\kappa}(s_{6}, 1.0)$} & 3 & 4 & 5\\ 
                                     & 0.053 & -0.013 & \textbf{0.09} \\
        \hline
         \multirow{2}{*}{\small$\text{PAXpred}_{\kappa}(s_{5}, 1.0)$} & 2 & 3 & 4\\ 
                                     & \textbf{0.034} & -0.036 & 0.023 \\
        \hline
        \end{tabular}
        \label{tab:DC_imp_scores}
    \end{table}

    \paragraph{Drone Coverage}
    
    In Figure~\ref{fig:DC_example}, we look at the actions of the blue drone. In the history, each drone is represented by a colored dot, and trees by green triangles. 
    The coverage of a drone is represented by cells of the same color as the drone. Dark grey cells mean that there is an overlap of coverage between two drones. 
    Above each environment display, only the action performed by the blue drone is shown. The actions are represented by an arrow or a square to represent, respectively, the direction chosen by the agent or the action of remaining stationary. 
    Each predicate presented on the third line of Figure~\ref{fig:DC_example} represents a partial description of a state of the blue drone in the history, including its position and its vision. In the vision part of the predicate, a dark grey box means that the feature is not part of the predicate. The rightmost situation in the last row, corresponding to the first PAXpred predicate, states that the blue drone is in row 7 and that the light grey squares are free. The other intermediate predicates impose the relative positions of two other drones and that some squares are free.

    A B-HXP (calculated in 13 seconds) for a DC history is shown in Figure~\ref{fig:DC_example}, with $l{=}3$ and $\delta{=}1$. The explanation is for the blue drone with the original predicate that this agent has a perfect cover. Importance scores are presented in Table~\ref{tab:DC_imp_scores}. The most important actions reflect the blue drone strategy 
    of moving away from the drones and trees in its cover. The predicates give a good intuition of the type of state (position and $5{\times}5$ view) the agent is trying to reach. 

\section{Related work}

XRL methods can be clustered according to the scope of the explanation (e.g. explaining a decision in a given situation or the policy in general), the key RL elements used to produce the explanation (e.g. states~\cite{DBLP:conf/icml/GreydanusKDF18,  DBLP:journals/corr/abs-1909-12969}, rewards~\cite{juozapaitis_explainable_nodate, DBLP:journals/corr/abs-2210-04723}), or the form of the explanation (e.g. saliency maps~\cite{DBLP:conf/icml/GreydanusKDF18} sequence-based visual summaries~\cite{DBLP:conf/atal/AmirA18, DBLP:journals/ai/SequeiraG20, DBLP:conf/icaart/SaulieresCB23}). 



One approach consists in generating counterfactual trajectories (state-action sequences) and comparing them with the agent's trajectory. 
In \cite{DBLP:journals/corr/abs-2210-04723}, reward influence predictors are learnt to compare trajectories. The counterfactual one is generated  based on the user's suggestion.
In~\cite{DBLP:journals/corr/abs-1807-08706}, a contrastive policy based on the user's question is built to produce the counterfactual trajectory. 
In the MDP context, Tsirtsis et al. generate optimal counterfactual trajectories that differ at most by $k$ actions~\cite{DBLP:conf/nips/TsirtsisDR21}. The importance score used in this paper is in line with the counterfactual view by evaluating scenarios where a different action took place at a given time.

EDGE~\cite{DBLP:conf/nips/GuoWKX21} is a self-explainable model. Like HXP, it identifies the important elements of a sequence. However, EDGE is limited to importance based on the final reward achieved, whereas HXPs allow the study of various predicates. In addition, HXP relies on the transition function (which is assumed to be known) and the agent's policy to explain, whereas EDGE~\cite{DBLP:conf/nips/GuoWKX21} requires the learning of a predictive model of the final-episode reward.
    
\section{Conclusion}

HXP (History eXplanation via Predicates) ~\cite{saulieres:hal-04170188} is a paradigm that, for a given history, answers the question: \emph{``Which actions were important to ensure that the predicate $d$ was achieved, given the agent’s policy $\pi$?"}. To do this, an importance score is computed for each action in the history. 
Unfortunately this calculation is \#W[1]-hard with respect to 
the length of the history to explain. 
To provide explanations for long histories, without resorting to importance score approximation, we propose 
the Backward-HXP approach: starting from the end of the history, 
it iteratively studies a subsequence, highlighting the most important action in it and defining a new intermediate predicate to study for the next sub-sequence
(working backwards). The intermediate predicate is a \emph{locally-minimal PAXp} corresponding to the state where the most important action took place. 

In the experiments, we observed that the genericity of a predicate $d$ and the search horizon $l$ influence the importance scores. The more generic the predicate $d$, the greater the probability of finding states at an
horizon $l$ that respect $d$. 
Conversely, a less generic predicate makes it more difficult to evaluate an action. 
A too specific  
predicate $d$ can lead to insignificant importance scores (close to 0) for the respect of $d$. 
In several histories, notably C4 ones, the predicates generated are not generic enough, leading to less interesting explanations.

Although in the examples the actions identified as important are often related to the respect of the initial predicate, this is not always the case.  
If we consider the first redefined predicate as a possible cause of the predicate being satisfied in the final state, then the second redefined predicate is a possible cause of a possible cause.  
The notion of causality can quickly become highly diluted (due to the fact that for computational reasons, we study a single cause at each step).
The user must be aware of this effect when computing B-HXPs.

B-HXP offers the user a credible alternative to forward HXP when studying the behaviour of an agent in long histories.
Furthermore, this approach is agnostic with regard to the agent learning algorithm. 
As done in 
\cite{saulieres:hal-04170188} for HXP, we also made the strong assumption for the use of
B-HXP that the transition function is known. It must be known during the explanation phase (not necessarily during training), or at least approximated, for example using an RL model-based method. 

More experiments are needed to ensure the quality and scalability of B-HXP, specially in environments with a large number of transitions.
When calculating a \emph{locally-minimal} PAXp, the order in which features are processed is important. An avenue of future work would be to direct the generation of \emph{locally-minimal} PAXp using a feature ordering heuristic, such as LIME~\cite{DBLP:conf/kdd/Ribeiro0G16}. In this way, it would be possible to compare the intermediate predicates and check whether this changes the important actions returned. 

Our experiments have shown the feasibility of finding the important actions in a long sequence of actions by redefining predicates, working backwards from the end of the sequence. However, we found that the intermediate predicates can quickly become very specific leading to the difficulty of calculating the importance scores of actions w.r.t. these very specific predicates. Further research is required to investigate this point.
The various complexity results obtained 
indicate the inherent difficulty of the problem.


\begin{ack}
This work has been supported in part by the ForML ANR project ANR-23-CE25-0009.
\end{ack}


\bibliography{mybibfile}

\end{document}